\documentclass{article}

\usepackage{microtype}
\usepackage{graphicx}
\usepackage{subcaption}
\usepackage{booktabs} % for professional tables

\usepackage[most]{tcolorbox}
\tcbuselibrary{skins}
\usepackage[preprint]{neurips_2026}
\usepackage{hyperref}
\usepackage{url}

\usepackage[ruled,vlined]{algorithm2e}

\usepackage{multirow}
\usepackage{adjustbox} % adjustbox
\usepackage{xcolor}
\usepackage{colortbl}
\usepackage{wrapfig}

\usepackage{amsmath}
\usepackage{amssymb}
\usepackage{mathtools}
\usepackage{amsthm}

\usepackage{amsmath,amsfonts,bm}

\def\eqref#1{equation~\ref{#1}}
\def\1{\bm{1}}

\def\rvepsilon{{\mathbf{\epsilon}}}

\def\vg{{\bm{g}}}

\def\mW{{\bm{W}}}

\DeclareMathAlphabet{\mathsfit}{\encodingdefault}{\sfdefault}{m}{sl}
\SetMathAlphabet{\mathsfit}{bold}{\encodingdefault}{\sfdefault}{bx}{n}

\def\gN{{\mathcal{N}}}

\newcommand{\R}{\mathbb{R}}

\DeclareMathOperator*{\argmax}{arg\,max}
\DeclareMathOperator*{\argmin}{arg\,min}

\usepackage[capitalize,noabbrev]{cleveref}

\theoremstyle{plain}
\newtheorem{theorem}{Theorem}[section]

\newtheorem{statement}[theorem]{Statement}
\theoremstyle{definition}

\theoremstyle{remark}

\definecolor{findingblue}{HTML}{0064E0}
\definecolor{takeawayred}{HTML}{C80A28}

\newcounter{fcounter}
\newcommand\finding[1]{%
  \refstepcounter{fcounter}\vspace{-2pt}%
  \begin{tcolorbox}[
    enhanced,
    colback=findingblue!10!white,
    frame hidden,            % no full border
    borderline west={2pt}{0pt}{blue!50!black}, % left-only blue bar
    boxsep=1pt,
    left=4pt,right=2pt,top=1pt,bottom=1pt,
    arc=0pt, outer arc=0pt,  % sharp corners
  ]
  \noindent{\textbf{\fontsize{10pt}{12pt}\selectfont Takeaway }: %
  \fontsize{10pt}{12pt}\selectfont #1}
  \end{tcolorbox}\vspace{-2pt}%
}
\crefname{fcounter}{Finding}{Findings}

\newcounter{kcounter}
\crefname{kcounter}{Takeaway}{Takeaways}
\usepackage[textsize=tiny]{todonotes}

\title{Efficient Long-Horizon Learning for Learned Optimization}

\author{%
Xiaolong Huang$^{1,3}$ \quad Benjamin Thérien$^{1, 4}$ \quad James Harrison$^{2}$ \quad Eugene Belilovsky$^{1,3}$ \\
$^1$Mila - Quebec AI Institute \quad $^2$Google DeepMind \\ $^3$Concordia University \quad $^4$Université de Montréal \\
}

\begin{document}

\maketitle

% story is:
% LOs have the potential to yield better optimization algoms
% meta-training is costly

\begin{abstract}
Learned optimization aims to improve upon hand-designed optimizers (e.g., Adam and Muon) by meta-learning small neural network optimizers over a distribution of tasks. While recent work has greatly advanced the architectural design and inductive biases of learned optimizers (LOs), their meta-training remains biased toward short-unroll learning on particular tasks, resulting in redundant computation and leaving LOs often unable to compete with hand-designed optimizers. We introduce Efficient Long-hOrizon (ELO) learning, an efficient meta-training algorithm that (1) reallocates wasted meta-training compute to longer failure regimes, achieving efficient long-horizon learning, and (2) enforces decoupled progressive expert supervision, providing stable meta-learning signals that additionally improve the generalization of LOs. 
Our empirical study evaluates ELO for meta-training both element-wise and matrix-based LOs. Across downstream language modeling (GPT-2-124M/350M on FineWeb) and image classification (ViT-B/16, ResNet-50 on ImageNet-1K) tasks, ELO substantially improves the long-unroll performance and out-of-distribution generalization of the base LOs. In particular, ELO-Celo2 consistently outperforms well-tuned AdamW across all evaluated tasks, while remaining competitive with Muon on language modeling. \textit{Notably, all ELO baselines require less than 7 H100 GPU-hours for meta-training.}
% Our empirical study evaluates ELO's effectiveness for meta-training both element-wise LOs (comparable to AdamW) and matrix-aware LOs (comparable to Muon). On downstream language modeling (GPT-2-124M/350M on FineWeb) and image classification (ViT-B/16, ResNet-50 on ImageNet-1K) tasks, LOs meta-trained with ELO consistently outperform other methods, including their corresponding well-tuned hand-designed counterparts. \textit{Notably, all ELO baselines require less than 7 H100 GPU-hours for meta-training.}
\end{abstract}

\newcommand{\boldtheta}{{\boldsymbol{\theta}}}
\newcommand{\boldepsilon}{\boldsymbol{\epsilon}}
\newcommand{\boldxi}{\boldsymbol{\xi}}

\newcommand{\bolds}{\boldsymbol{s}}
\newcommand{\boldx}{\boldsymbol{x}}
\newcommand{\bolda}{\boldsymbol{a}}
\newcommand{\boldb}{\boldsymbol{b}}
\newcommand{\boldc}{\boldsymbol{c}}

\newcommand{\boldh}{\boldsymbol{h}}
\newcommand{\boldg}{\boldsymbol{g}}
\newcommand{\boldp}{\boldsymbol{p}}
\newcommand{\boldq}{\boldsymbol{q}}
\newcommand{\boldv}{\boldsymbol{v}}

\newcommand{\boldzero}{\boldsymbol{0}}
\newcommand{\boldone}{\boldsymbol{1}}

\newcommand{\hatg}{\hat{\boldg}}
\newcommand{\ges}{\hat{\boldg}^{\text{ES}}}
\newcommand{\gpes}{\hat{\boldg}^\text{PES}}

\newcommand{\dd}{{\mathrm{d}}}
\newcommand{\pd}[2]{\frac{\partial #1}{\partial #2}}
\newcommand{\pdn}[3]{\frac{\partial^#1 #2}{\partial #3^#1}}
\newcommand{\od}[2]{\frac{\dd #1}{\dd #2}}
\newcommand{\odn}[3]{\frac{\dd^#1 #2}{\dd #3^#1}}
\newcommand{\avg}[1]{\left< #1 \right>}
\newcommand{\pp}[1]{\left( #1 \right)}
\newcommand{\mb}{\mathbf}
\newcommand{\mx}{\mathbf x}
\newcommand{\mc}{\mathcal}
\newcommand{\bb}{\mathbb}
\renewcommand{\argmin}{\operatornamewithlimits{argmin}}
\renewcommand{\argmax}{\operatornamewithlimits{argmax}}
\newcommand{\median}{\operatornamewithlimits{median}}
\newcommand{\norm}[1]{\left|\left| #1 \right|\right|}
\newcommand{\expect}[2]{\mathbb{E}_{#1}\left[ #2 \right]}
\renewcommand{\R}[1]{\mathbb{R}^{#1}}
\newcommand{\ovec}[1]{\operatorname{vec}\pp{#1}}
\makeatletter
\def\gpesanti{%
  \@ifnextchar^%
    {\@gpesanti}
    {\@gpesanti^{}}%
}
\def\@gpesanti^#1{%
  % \hat{\boldg}^{{\text{PES-A}}^{#1}}%
  \hat{\boldg}^{\text{PES-A} #1}%
}
\makeatother
\newcommand{\veps}{\operatorname{vec}(\boldepsilon)}
\newcommand{\vTheta}{\operatorname{vec}(\Theta)}
\newcommand{\gesanti}{\hat{\boldg}^{\text{ES-A}}}

\definecolor{bestaccblue}{HTML}{0064E0}
\newcommand{\bestacc}[1]{\cellcolor{bestaccblue!20}\(\textbf{#1}\)}
\renewcommand{\arraystretch}{1.2}
\newtcolorbox{takeawaybox}{
    colback=bestaccblue!10,
    colframe=blue!40!black,
}

\section{Introduction}
The remarkable achievements of deep neural networks are closely tied to the evolution of optimization techniques~\citep{sun2019survey, sun2020optimization, abdulkadirov2023survey}. While hand-designed optimizers (e.g., Adam, Muon, etc.) remain the dominant choice in practice, learned optimizers (LOs), which use a small neural network to estimate parameter updates, have demonstrated the potential to discover superior optimization rules through meta-learning~\citep{andrychowicz2016learning,metz2019understanding,strongerbaselines,moudgil2025celo,moudgil2026celo2, therien2024mulo}.

\begin{figure}[t]
    \centering
    \includegraphics[width=1.0\linewidth]{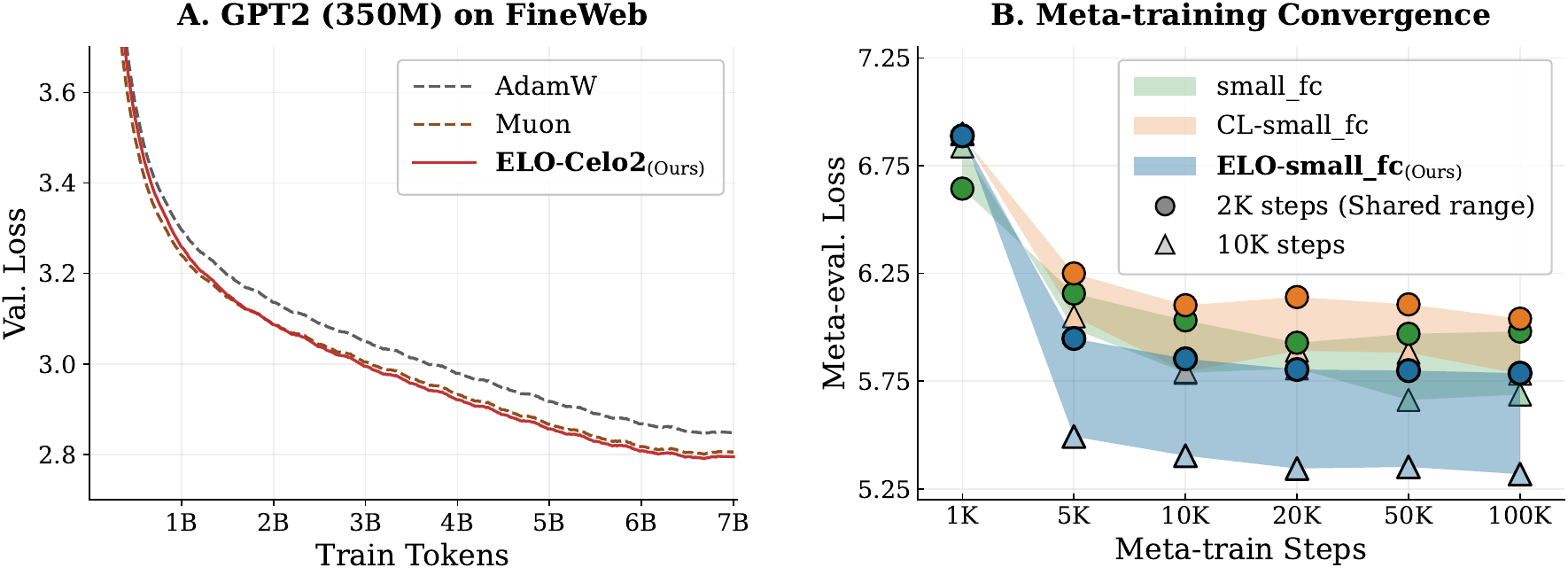}
    \caption{
    \textbf{ELO improves downstream optimization and meta-training efficiency.}
    \textbf{(A)} Validation loss on GPT-2 350M pretraining with FineWeb. 
    Meta-trained only on four tiny vision-MLP tasks, ELO-Celo2 is able to generalize to realistic language modeling problem, outperforming AdamW and matching Muon. 
    \textbf{(B)} Meta-evaluation loss during meta-training. 
    ELO converges faster than other baselines over the 2K-step unroll covered by all methods, and achieves lower loss at the longer 10K-step horizon.
    }
    \label{fig:elo_lm_and_meta_efficiency}
\end{figure}

A long line of work has sought to improve learned optimization mainly through architectural innovations and carefully designed inductive biases. Architecturally, LOs have evolved from LSTM-based update rules~\citep{andrychowicz2016learning} and hierarchical RNNs~\citep{wichrowska2017learned} to LSTM-MLP hybrids~\citep{metz2020tasksstabilityarchitecturecompute}, per-parameter MLPs~\citep{metz2022practical}, LSTM hypernetworks~\citep{metz2211velo}, and, more recently, per-parameter MLPs equipped with Newton-Schulz iterations~\citep{moudgil2026celo2}. Complementary work has improved the inductive bias of LOs by adding nominal descent terms, weight decay, and output preconditioning~\citep{harrison2022closer}, incorporating Hessian and local-entropy flatness-aware regularizers~\citep{yang2023learninggeneralizeprovablylearning}, and placing the LOs in the $\mu$P parameterization to improve generalization~\citep{therien2024mulo}. Despite steady progress in these directions, existing meta-training algorithms still struggle to scale efficiently to long-horizon optimization learning. Moreover, the resulting LOs often underperform comparable hand-designed optimizers.

In this work, we start by providing an analysis of how traditional meta-training pipelines lead to wasted compute. To address this inefficiency, we use a failure-aware resume buffer to effectively recycle compute from redundant early-unroll learning to later regions where long-horizon failures happen. However, this also introduces unstable meta-training dynamics due to compounding errors in long-unroll optimization. For stabilization and generalization purposes, we additionally constrain the LO with decoupled progressive expert guidance, which anchors early parameter updates to a reliable expert optimizer through separate direction and magnitude supervision, and gradually relaxes this constraint as the LO improves. Our contributions can be summarized as follows:
\vspace{-5pt}

\begin{itemize}
    \setlength\itemsep{0em}
    \item We propose efficient long-horizon learning (ELO), a novel meta-training algorithm for learned optimization, which incorporates a failure-aware resume buffer and decoupled progressive expert guidance. We empirically demonstrate that ELO achieves faster meta-training convergence than prior approaches under the same compute budget.
    \item We establish the performance of LOs meta-trained with ELO on practical downstream image classification and language modeling tasks, where ELO baselines consistently improve over their corresponding base LOs. Most notably, ELO-Celo2 outperforms well-tuned AdamW across all evaluated tasks and remains competitive with Muon on language modeling.
    \item Through extensive ablation studies, we establish the contribution of ELO's key components, including the improvements from the failure-aware buffer, the effect of expert optimizer choice, and the benefit of a decoupled imitation objective over a standard MSE loss.
\end{itemize}
Our code is open-sourced at: \href{https://github.com/xiaol827/ELO}{\textcolor{blue}{\underline{https://github.com/xiaol827/ELO}}}.
\vspace{-5pt}
\section{Background}\label{sec:background}
We review the necessary background and related work on learned optimization to establish the context for our method. For an extended review of related literature, we refer the reader to Appendix.~\ref{apdx:sec:relatedwork}.

\textbf{LO Architecture.} In this work, we adopt two LO architectures: the state-of-the-art Celo2~\citep{moudgil2026celo2} and the standard per-parameter MLP baseline \texttt{small\_fc\_lopt}~\citep{metz2022practical}, which we refer to as \texttt{small\_fc} for brevity. \texttt{small\_fc} is a three-layer MLP with ReLU activations. It takes as input a feature vector (e.g., gradient, momentum, rms, etc.) for each parameter $\theta$ of the optimizee (the network currently being optimized) and outputs an update direction, $d$, and magnitude, $m$. That is, $f_\phi(\cdot)=[d,m]$, where $f_\phi$ is the LO and $\phi$ denotes its parameters. Concretely, the update is applied as follows:
\begin{align}
    \theta_{n+1} = \theta_{n} - \Delta_n 
    ;\;\;\;
    \Delta_n = \lambda_1 d \times e^{\lambda_2 m},
    \label{eq:lo_update}
\end{align}
where $\lambda_1$ and $\lambda_2$ are hyperparameters usually fixed to $0.001$. Building on the element-wise update rule of \texttt{small\_fc}, Celo2 introduces tensor-level orthonormalization as a key improvement for non-1D parameters in the optimizee. Specifically, for any weight matrix $\mW$ in the optimizee, the update is applied as:
\begin{equation}
    \mW_{n+1} = \mW_{n} - \text{NewtonShulz5}(\Delta_n)/\|\Delta_n\|_\text{rms}.
    \label{eq:lo_update}
\end{equation}
\vspace{-1.2em}

\textbf{Meta-training Objective} In general, learning the meta-parameters, $\phi$, involves solving an optimization problem of the form:
\begin{equation}\label{eq:obj}
\min_{\phi} \; \mathbb{E}_{\tau \sim \mathcal{D}} \Big[ \sum_{n=1}^N \mathcal{L}^{\text{meta}}_n(\theta_n; \phi, \tau) \Big],
\end{equation}
where $\mathcal{D}$ is a distribution of tasks, $\tau$ specifies a task (e.g., optimizee initialization, dataset, objective). The objective seeks to minimize the sum of per inner step losses over the training horizon $N$~\citep{therien2024mulo}.

\begin{figure}[t]
    \centering
    {\includegraphics[width=1.0\linewidth]{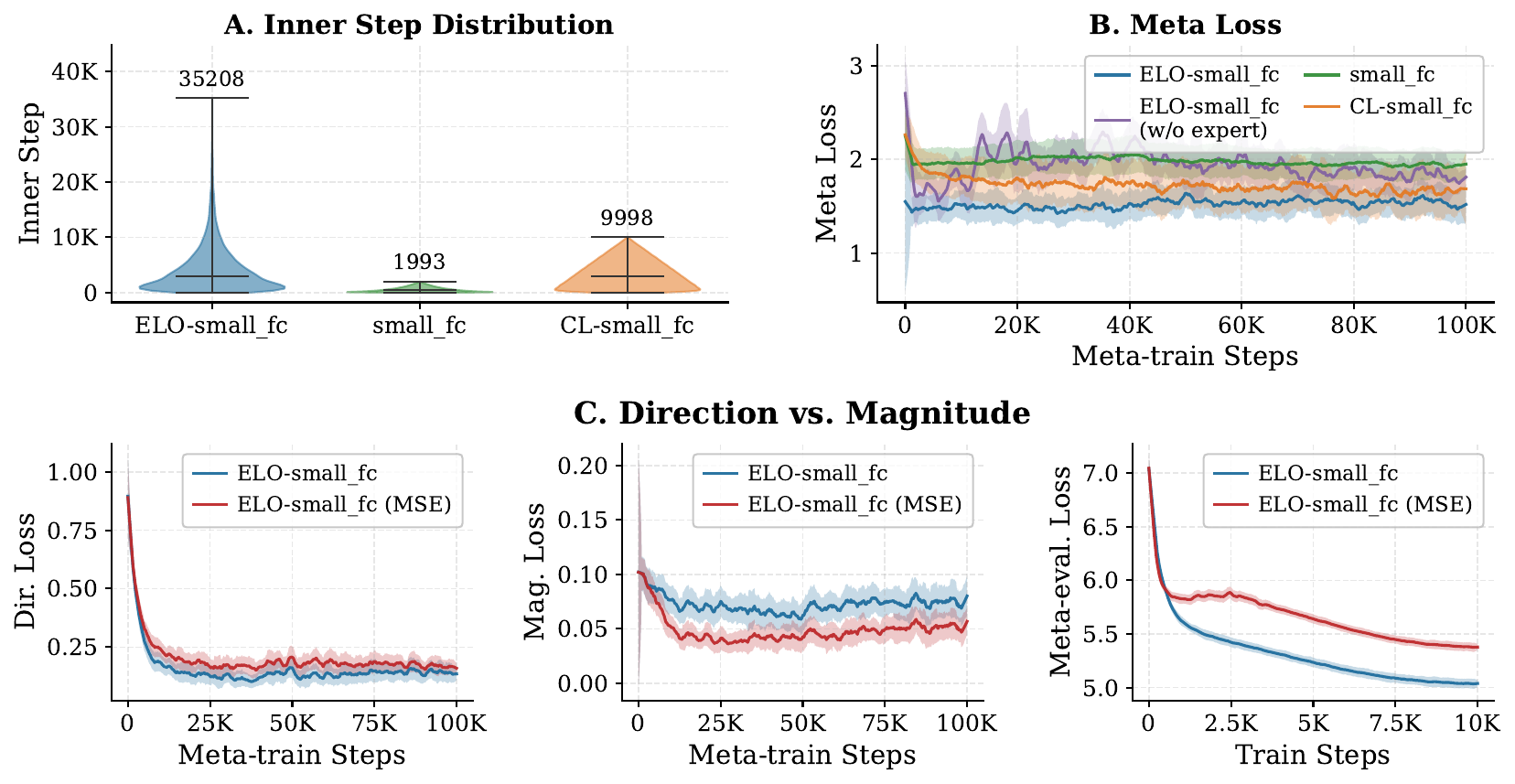}}
\caption{
\textbf{Ablation of ELO sampling and expert imitation.}
\textbf{(A)} ELO shifts meta-training budget toward longer inner horizons, reaching up to $17\times$ larger inner steps than the standard baseline.
\textbf{(B)} Complete ELO remains stable under longer-horizon sampling, whereas buffer-only training without progressive expert supervision becomes unstable.
\textbf{(C)} Compared with direct MSE loss, the decoupled direction and magnitude objective yields better direction alignment and lower meta-evaluation loss.
}
\label{fig:buffer_init_dir_mag}
\end{figure}

\SetKwInput{KwInput}{Input}
\SetKwInput{KwInit}{Initialize}
\SetKwInput{KwNotations}{Notations}
\definecolor{algblue}{HTML}{0064E0}
\definecolor{algred}{HTML}{C80A28}

\newtcbox{\redbox}[1][]{
  colback=algred!20,
  colframe=algred!20,  % same as fill = no visible border
  arc=6pt,                   % corner radius
  boxrule=1pt,
  left=0pt, right=0pt, top=-2pt, bottom=-2pt,
  nobeforeafter,
  tcbox raise base,
  #1
}

\newtcbox{\bluebox}[1][]{
  colback=algblue!20,
  colframe=algblue!20,  % same as fill = no visible border
  arc=6pt,                   % corner radius
  boxrule=1pt,
  left=0pt, right=0pt, top=-2pt, bottom=-2pt,
  nobeforeafter,
  tcbox raise base,
  #1
}

\begin{figure}[t]
\centering
\begin{minipage}{0.65\textwidth}
\begin{algorithm}[H]
\caption{Efficient Long-hOrizon Learning (ELO)}
\label{alg1}
\SetAlgoLined

\For{$t=0,1,\ldots,T-1$}{
    Sample $u \sim \operatorname{Uniform}(0,1)$\;
    \eIf{$(u < P_{\mathcal B}) \wedge (\mathcal B \neq \emptyset)$}{
        \redbox{$K,\theta_K \leftarrow \mathcal B$}\;
    }{
        $K=0,\theta_0 \leftarrow$ randomly initialized inner state\;
    }

    $V_K \leftarrow 0$; \quad $\alpha_t=\frac{t}{T-1}$\;

    \For{$n=K,K+1,\ldots,K+N-1$}{
        $\theta_{n+1}^{\mathcal{E}} \gets \theta_{n} + \Delta\theta_{n}^{\mathcal{E}}$; \quad
        $\theta_{n+1}^{\mathcal{O}} \gets \theta_{n} + \Delta\theta_{n}^{\mathcal{O}}$\;

        \bluebox{$\theta_{n+1} \gets (1-\alpha_t) \theta_{n+1}^{\mathcal{E}} + \alpha_t\theta_{n+1}^{\mathcal{O}}$}\;
        \bluebox{$\mathcal{L}_n^{\mathrm{dir}}\leftarrow
1 -
\frac{
\Delta\theta_{n}^{\mathcal{E}} \cdot \Delta\theta_{n}^{\mathcal{O}}
}{
\left\|\Delta\theta_{n}^{\mathcal{E}}\right\|_2
\left\|\Delta\theta_{n}^{\mathcal{O}}\right\|_2
}$}\;
        \bluebox{$\mathcal{L}_n^{\mathrm{mag}} \leftarrow
\left|
\left\|\Delta\theta_{n}^{\mathcal{E}}\right\|_2
-
\left\|\Delta\theta_{n}^{\mathcal{O}}\right\|_2
\right|$}\;
        \bluebox{
        $\mathcal L^{\mathrm{imt}}_n
        \leftarrow
        \lambda \mathcal L^{\mathrm{dir}}_n
        +
        (1-\lambda)\mathcal L^{\mathrm{mag}}_n$
        }\;

        \bluebox{
        $\mathcal L^{\mathrm{meta}}_n
        \leftarrow
        (1-\alpha_t)\mathcal L^{\mathrm{imt}}_n
        +
        \alpha_t \mathcal L^{\mathrm{task}}_n$
        }\;

        \redbox{
        $\Delta^{\ell}_{n} \leftarrow \mathcal L^{\mathrm{task}}_{n-1} - \mathcal L^{\mathrm{task}}_{n} \quad\text{if}\quad n \neq K \quad \text{else} \quad 0$}\;
        \redbox{$V_{n+1} \leftarrow \max(0, V_n - \Delta^{\ell}_{n})$}\;
    }

    \redbox{
    $n^\star \leftarrow \arg\max V_n$
    }\;
    \redbox{
    $K_{\mathrm{push}} \leftarrow \max(K, n^\star - R)$
    }\;
    \redbox{
    $\mathcal B \leftarrow s_{K_{\mathrm{push}}}$
    }\;

    $g_t \leftarrow
    \operatorname{PES}
    \left(
    \sum_{n=K}^{K+N-1} \mathcal L^{\mathrm{meta}}_n
    \right)$\;
    $\phi_{t+1} \leftarrow \operatorname{Update}(\phi_t, g_t)$\;
}
\end{algorithm}
\end{minipage}%
\hspace{0.2cm}
\vrule
\hspace{0.2cm}
\begin{minipage}{0.30\textwidth}
\KwInput{
Resume probability $P_{\mathcal B}=0.8$, truncation length $R=50$, direction loss weight $\lambda=0.7$.
}
\KwInit{
Resume buffer $\mathcal B \leftarrow \emptyset$.
}
\vspace{0.8em}
\noindent\textbf{Notation:}
Red boxes designate resume buffer operations, while blue boxes highlight progressive expert supervision. For simplicity, we assume the truncation length equals the unroll horizon in this demonstration.

\vspace{0.8em}

\noindent
\begin{tabular}{@{}ll@{}}
$\mathcal B$ & resume buffer \\
$s_n$ & inner state at step $n$ \\
$K$ & inner start step \\
$N$ & maximum unroll horizon \\
$T$ & meta-train steps \\
$\mathcal E$ & expert optimizer \\
$\mathcal O$ & LO \\
$\theta$ & optimizee parameters \\
$\phi$ & LO parameters \\
$\alpha_t$ & progressive schedule \\
$V_n$ & difficulty score
\end{tabular}
\end{minipage}
\end{figure}

\textbf{Unbiased Truncated Gradient Estimation} 
Estimating meta-gradients of long unrolled computation graphs with backpropagation is prone to noise due to exploding or vanishing gradients~\citep{metz2019understanding}. As such, the learned optimization community has turned towards zeroth-order estimators to estimate meta-gradients, including full ES~\citep{metz2019understanding} and the more efficient truncated estimator PES~\citep{vicol2021pes}, which we adopt in this work. PES is an unbiased estimator of the Gaussian-smoothed objective~\citep{vicol2021pes}. It estimates meta-gradients over segments of $R$ inner steps, where $R \ll N$ and $N$ denotes the full inner-problem horizon. With antithetic samples, the PES estimator at truncation index $k$ is given by:
\begin{equation}\label{eq:pesgrad}
\hat \vg^{\mathrm{PES}}_k
=
\frac{1}{2\sigma^2 P}
\sum_{i=1}^{P}
\sum_{n=1}^{R}
\xi^{(i)}_k
\Delta \mathcal{L}^{\text{meta}}_{n,i,k} ;\;\;\; \Delta \mathcal{L}^{\text{meta}}_{n,i,k} = \mathcal{L}^{\text{meta}}_n(\phi + \rvepsilon^{(i,k)})
-
\mathcal{L}^{\text{meta}}_n(\phi - \rvepsilon^{(i,k)}).
\end{equation}
\vspace{-0.30em}

Where $P$ is the number of monte-carlo estimates or particles, $\xi_k^{(i)}=\sum^{k}_{j=1} \rvepsilon^{(i,j)}$ is an accumulator of perturbations, and $\rvepsilon^{(i,j)}$ is the current perturbations sampled from $\gN(0,\sigma^2)$. Since our failure-aware resume buffer maintains the PES state, the meta-gradients resulting from resumed inner problems are also unbiased (See Appendix~\ref{apdx:pes} for more details).

\paragraph{Sampling the inner problem horizon.}
\label{bg:traditional_sampling}
A crucial but seldom discussed aspect of meta-training LOs is how to choose the horizon of the inner problems. Existing work samples the unroll horizon mainly through log-uniform sampling~\citep{metz2022practical,therien2024mulo,moudgil2025celo,moudgil2026celo2} or curriculum sampling~\citep{chen2020training}. In log-uniform sampling, each inner problem is assigned a horizon $N$ sampled according to:
\begin{equation}
    p(N) =
    \frac{1}{N \log (N_{\max}/N_{\min})},
    \qquad
    N \in (N_{\min}, N_{\max}] .
\end{equation}
In curriculum sampling, training starts from an initial horizon $N_{\mathrm{init}}$ and increases it over time:
\begin{equation}
    N_j =
    \min\left(
    N_{\max},
    N_{\mathrm{init}}
    +
    \left\lfloor \frac{j}{N_{\mathrm{period}}} \right\rfloor
    \Delta_N
    \right),
\end{equation}
where $j$ denotes the number of inner problem resets, $N_{\mathrm{period}}$ controls how often the horizon is increased, and $\Delta_N$ is the horizon increment. Larger values of $N_{\max}$, or more aggressive curriculum schedules, expose the LO to very long horizons, but also risk instability, similar to ELO-\texttt{small\_fc}(w/o expert) in \textbf{Figure~\ref{fig:buffer_init_dir_mag} (B)}. Therefore, in practice, relatively conservative settings are commonly used (\textbf{Figure~\ref{fig:buffer_init_dir_mag} (A)}).

On the other hand, these strategies constantly start an unroll from random initialization, which our analysis shows to be inefficient. Consider a case where the LO makes an update error $\epsilon_n$ at inner step $n$, then the state reached at a later step $m > n$ depends on the accumulated effect of all previous errors:

\begin{equation}
    \theta_m - \theta_m^\star
    \approx
    \sum_{n=0}^{m-1} C_{n,m}\epsilon_n ,
\end{equation}
where $\theta_m^\star$ represents the ideal optimal state at step $m$, and $C_{n,m}$ describes how an error made at step $n$ propagates to the state at step $m$. Consequently, shorter horizons usually provide a cleaner meta-gradient signal due to smaller accumulated error along the inner trajectory, which further allows the optimizer to more easily master the early stages of optimization. As a result, once early-stage behavior has been learned, random restarts continue to allocate a disproportionate fraction of the meta-training budget to these already well-covered regions, leaving later, failure-prone regimes under-trained.

\begin{figure}[t]
    \centering
    \includegraphics[width=1.0\linewidth]{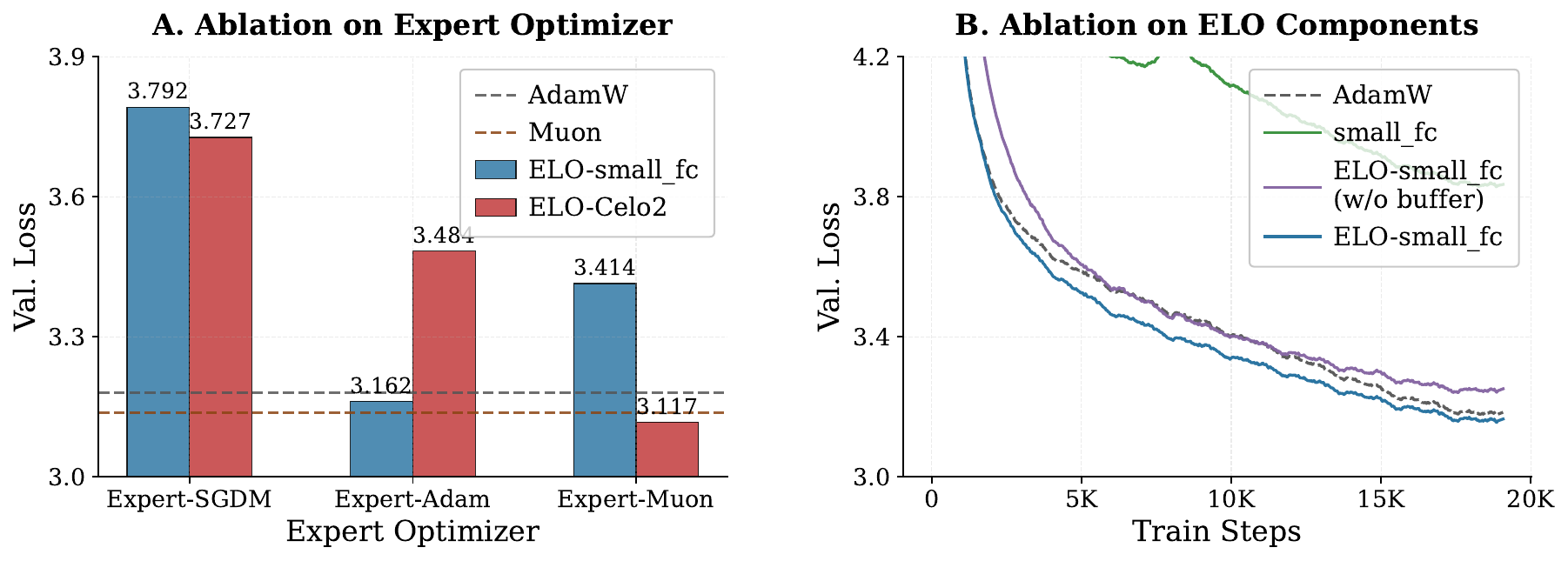}
    \caption{
    \textbf{Ablation of expert choice and resume buffer.}
    Downstream validation loss on GPT-2 124M pretraining. 
    \textbf{(A)} The best expert depends on the LO architecture: AdamW works best for the element-wise \texttt{small\_fc}, while Muon works best for the matrix-based Celo2. 
    \textbf{(B)} Progressive expert supervision substantially improves the generalization of ELO-\texttt{small\_fc} to language modeling tasks, while the resume buffer further improves its long-unroll optimization performance.
    }
    \label{fig:ablation_expert_buffer}
\end{figure}

\section{ELO: Efficient Long-hOrizon learning}
In this section, we describe how ELO addresses the imbalance mentioned in \textbf{Sec.~\ref{bg:traditional_sampling}} and how its components interact to ensure an efficient and stable long-horizon meta-training loop. A pseudocode description of ELO can be found in \textbf{Algorithm}~\ref{alg1}.

% The threshold at $0$ prevents earlier progress from masking later failures, while the accumulation highlights sustained regions of poor optimization. 

\subsection{Failure-Aware Resume Buffer}
\begin{wrapfigure}{r}{0.42\textwidth}
    \centering
    \vspace{-15pt}
    \includegraphics[width=0.42\textwidth]{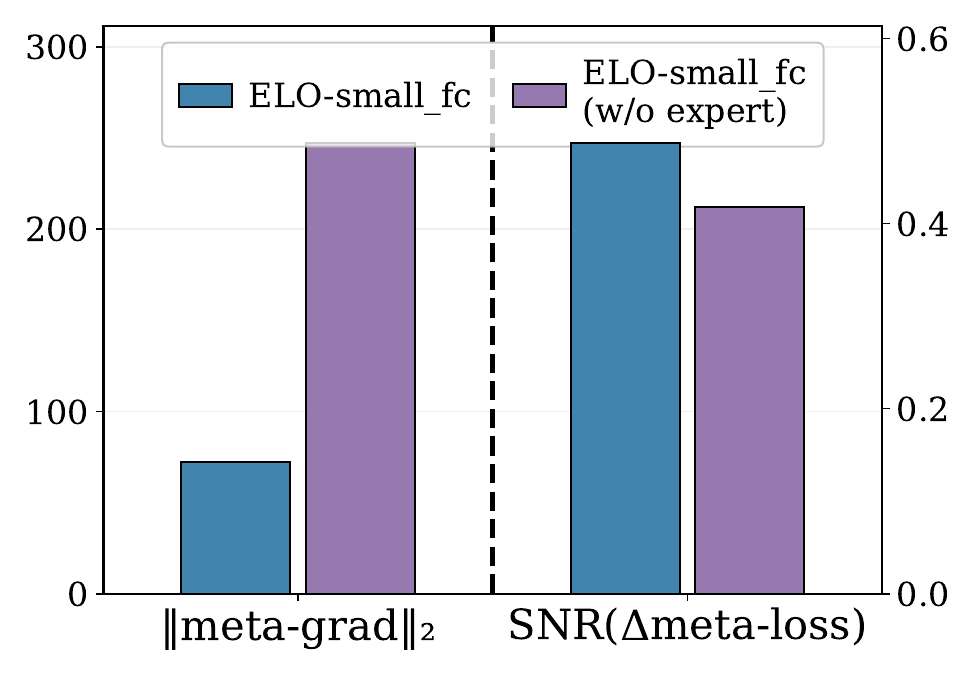}
    \caption{
Expert supervision reduces meta-gradient variance and improves the signal-to-noise ratio of the $\Delta$meta-loss between antithetic PES perturbations (See \textbf{Sec.~\ref{sec:background}}), leading to cleaner meta-gradient estimates.
}
    \vspace{-10pt}
    \label{fig:meta_grad_snr_bars}
\end{wrapfigure}
During meta-training, we maintain a failure-aware resume buffer that provides difficult restart states from ongoing trajectories for future unrolls. Specifically, we identify these regions from the loss dynamics of the current trajectory. Given inner losses $\ell_0, \ell_1, \ldots, \ell_N$, define the one-step loss improvement $\Delta_n = \ell_{n-1} - \ell_n$. Positive values indicate progress, while non-positive values indicate stagnation or divergence. We convert this local signal into a cumulative difficulty score, $V_n = \max(0, V_{n-1} - \Delta_n)$, with $V_0 = 0$. Values close to $0$ indicate consistent optimization progress, while large positive values indicate regions of poor optimization. We choose the hardest point as $n^\star = \arg\max_n V_n$, and store the state one truncation window before this step, $K_{\mathrm{push}} = \max(0, n^\star - R)$. This process ensures the next unroll will pass through the onset of the difficult region rather than resume after it. At the beginning of each new unroll, we resume from the buffered state with probability $P_{\mathcal B}$ and otherwise start from a fresh random initialization. This shifts a controlled fraction of meta-training compute from already well-covered early stages to later regimes where long-horizon failures emerge. We set the buffer size to one by default, both for simplicity and to avoid stale states.

\subsection{Progressive Expert Supervision}
When setting a large resume probability $P_{\mathcal B}$, we observe unstable meta-training dynamics (ELO-small\_fc (w/o buffer) in \textbf{Figure~\ref{fig:buffer_init_dir_mag} (B)}). %, as shown in \textbf{Figure~\ref{fig:buffer_init_dir_mag} (B)}. 
This is analogous to the long-trajectory learning difficulty observed in reinforcement learning~\citep{sutton1998reinforcement, ross2011reduction}. In our case, a large $P_{\mathcal B}$ can expose the LO to very long unrolls early in meta-training, where the LO has not yet built a reliable policy. Its update errors then compound along the inner trajectory, causing the late-stage inner loss to grow exponentially and finally diverge. This further makes the meta-gradients estimated by PES dominated by high variance noise (ELO-small\_fc (w/o buffer) in \textbf{Figure~\ref{fig:meta_grad_snr_bars}}), which eventually destabilizes meta-training and, in severe cases, can lead to collapse.

To stabilize meta-training under a large $P_{\mathcal B}$ and encourage task-agnostic generalization learning, we regularize the LO with a reliable expert optimizer during early meta-training and progressively relax this constraint as the LO improves.

\paragraph{Trajectory fusion.} For any inner step $n$, we fuse the trajectory produced by the expert and the one produced by the LO in the following way:
% \begin{align}\label{eq_trajectory}
% \theta_{n+1}^{\mathcal{E}} &= \theta_{n} + \Delta\theta_{n}^{\mathcal{E}}, \nonumber \\
% \theta_{n+1}^{\mathcal{O}} &= \theta_{n} + \Delta\theta_{n}^{\mathcal{O}}, \nonumber \\
% \theta_{n+1} &= (1-\alpha_t)\,\theta_{n+1}^{\mathcal{E}} + \alpha_t \theta_{n+1}^{\mathcal{O}},
% \end{align}
\begin{align}\label{eq_trajectory}
\theta_{n+1}^{\mathcal{E}} = \theta_{n} + \Delta\theta_{n}^{\mathcal{E}}; \;\;\;\; \theta_{n+1}^{\mathcal{O}} = \theta_{n} + \Delta\theta_{n}^{\mathcal{O}}; \;\;\;\; 
\end{align}
\begin{equation}
    \theta_{n+1} = (1-\alpha_t)\,\theta_{n+1}^{\mathcal{E}} + \alpha_t \theta_{n+1}^{\mathcal{O}},
\end{equation}
where $\mathcal{E}$ and $\mathcal{O}$ denote the expert and the LO, respectively. As meta-training progresses, the LO's updates become increasingly reliable; the schedule $\alpha_t$ correspondingly shifts weight from the expert to the LO, keeping the inner trajectory high-quality throughout meta-training.

\begin{figure}[t]
    \centering
    {\includegraphics[width=1.0\linewidth]{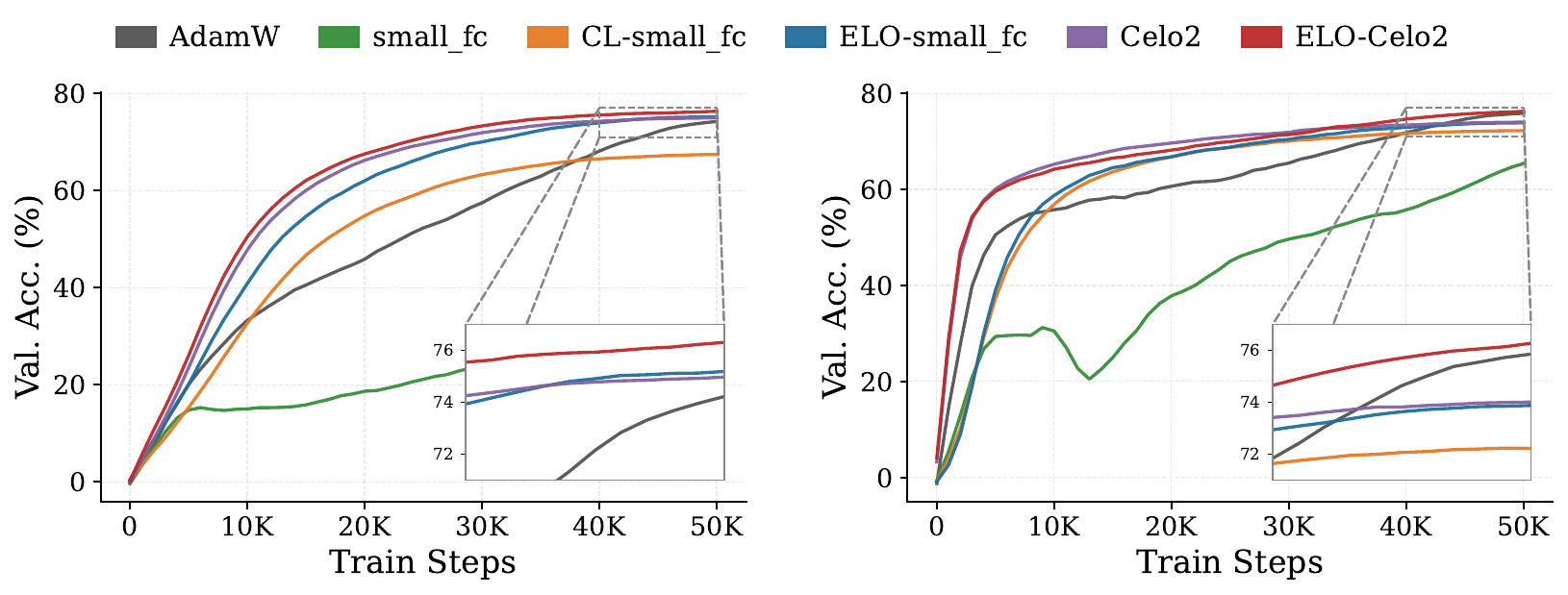}}\vspace{-10pt}
\caption{
\textbf{ImageNet-1K training from scratch.}
Top-1 validation accuracy for ViT-B/16 (Left) and ResNet-50 (Right) trained on ImageNet-1K at $224\times224$ resolution for 50K steps with batch size 2048. 
ELO-trained LOs consistently improve over their corresponding LO baselines and outperform the AdamW baseline on both architectures. 
Insets zoom in on the final training stage.
}
\label{fig:in1k_train_val_loss}
\end{figure}

\paragraph{Decoupled Progressive Expert Supervision.} When applying mixed trajectories, meta-gradients at the beginning of meta-training become uninformative, as they are estimated from pure expert trajectories ($\alpha_t=0$), rather than trajectories induced by the LO itself. To address this, we leverage expert supervision to provide clean gradient signals for early meta-training. A natural choice is to imitate expert updates with a standard MSE loss; however, this compresses two distinct aspects of an update, its direction and its magnitude, into a single scalar objective. We hypothesize that imitating the direction and magnitude of expert updates carries explicitly distinct importance (verified in \textbf{Figure~\ref{fig:dir_mag_sweep}}). We decouple the imitation objective into a weighted combination of direction supervision and magnitude supervision, yielding the following learning objective:
\begin{equation}
\mathcal{L}^{\mathrm{imt}}_n
=
\lambda \mathcal{L}_n^{\mathrm{dir}}
+
(1-\lambda) \mathcal{L}_n^{\mathrm{mag}};
\;\;
\mathcal{L}_n^{\mathrm{dir}}
=
1 -
\frac{
\Delta\theta_{n}^{\mathcal{E}} \cdot \Delta\theta_{n}^{\mathcal{O}}
}{
\left\|\Delta\theta_{n}^{\mathcal{E}}\right\|_2
\left\|\Delta\theta_{n}^{\mathcal{O}}\right\|_2
};
\;\;
\mathcal{L}_n^{\mathrm{mag}}
=
\left|
\left\|\Delta\theta_{n}^{\mathcal{E}}\right\|_2
-
\left\|\Delta\theta_{n}^{\mathcal{O}}\right\|_2
\right|.\nonumber
\end{equation}
We define the meta-loss as $\mathcal{L}^{\mathrm{meta}}_n = (1 - \alpha_t) \, \mathcal{L}^{\mathrm{imt}}_n + \alpha_t \, \mathcal{L}^{\mathrm{task}}_n$, where $\alpha_t$ follows the same schedule as in \textbf{Eq.}~\ref{eq_trajectory}. 
When $\alpha_t=0$, the objective reduces to pure expert supervision. 
As $\alpha_t$ increases, the objective continuously interpolates between imitation and task optimization. 
At $\alpha_t=1$, the LO is trained only through task-driven loss, allowing it to improve beyond the expert policy.

\begin{table}[t]
  \centering
  \vspace{-10pt}
  \caption{
    \textbf{Best validation accuracy across benchmarks.}
    Top-1 accuracy on ImageNet-1K validation, ImageNet-ReaL, and ImageNet-V2 for ResNet-50 and ViT-B/16 trained from scratch. The best value in each column is in \textbf{bold}.
    }
  \label{tab:new_results}
  \begin{adjustbox}{width=1.0\textwidth}
  {\scriptsize
  \begin{tabular}{lccc|ccc|c}
  \toprule
   & \multicolumn{3}{c}{ResNet-50} & \multicolumn{3}{c}{ViT-B/16} & \\
  Method & IN-1K & IN-Real & IN-V2 & IN-1K & IN-Real & IN-V2 & Avg. \\
  \midrule
  AdamW                                        & 75.88 & 82.19 & 63.40 & 74.16 & 80.87 & 61.39 & 72.98 \\
  small\_fc                                    & 65.36 & 73.70 & 53.43 & 31.77 & 36.70 & 25.57 & 47.75 \\
  CL-small\_fc                                 & 72.25 & 79.53 & 59.93 & 67.46 & 74.74 & 54.94 & 68.14 \\
  Celo2                                        & 74.04 & 80.91 & 61.93 & 74.97 & 81.10 & 62.02 & 72.50 \\
  \midrule
  ELO-small\_fc$_{(\text{Ours})}$     & 73.92 & 81.00 & 62.11 & 75.18 & 81.70 & 61.98 & 72.65 \\
  ELO-Celo2$_{(\text{Ours})}$         & \textbf{76.29} & \textbf{82.93} & \textbf{64.29} & \textbf{76.33} & \textbf{82.21} & \textbf{63.10} & \textbf{74.19} \\
  % \bestacc{76.29} & \bestacc{82.93} & \bestacc{64.29} & \bestacc{76.33} & \bestacc{82.21} & \bestacc{63.10} & \bestacc{74.19} \\
  \bottomrule
  \end{tabular}
  }
  \end{adjustbox}
\end{table}

%%%%%%%%%%%%%%%%%%%%%%%%%%%%%%%%%%%%%%%%%%%%%%%%%%%%%%%%%%%%%%%
% Longer version 
%%%%%%%%%%%%%%%%%%%%%%%%%%%%%%%%%%%%%%%%%%%%%%%%%%%%%%%%%%%%%%%
\section{Empirical Evaluation}\label{ss:experiments}
We describe the main experimental settings for evaluating ELO, with complete details provided in Appendix~\ref{apdx:ss:experiments}.

\paragraph{Baselines.}
We adopt two representative MLP-based LOs, 
\texttt{small\_fc}~\citep{metz2022practical} and Celo2~\citep{moudgil2026celo2}. For each architecture, we compare the ELO-trained variant, ELO-\texttt{small\_fc} and ELO-Celo2, with its non-ELO counterpart. 
We also compare against CL-\texttt{small\_fc}, a curriculum baseline with alternating offline imitation learning inspired by~\citep{chen2020training}, and strong hand-designed optimizers including AdamW~\citep{loshchilov2017decoupled} and Muon~\citep{jordan2024muon}. 

\textbf{Meta-training.} Our experimental setup and meta-training pipeline largely follow~\citep{metz2022practical, therien2024mulo,moudgil2025celo} with tuned hyperparameters applied for each baseline. All LOs are meta-trained on four simple $8 \times 8$ image classification tasks: MNIST, Fashion-MNIST, CIFAR-10, and SVHN. The optimizee is a one-layer MLP with width 32. We meta-train for 100K outer steps with batch size 64, using AdamW as the meta-optimizer with a cosine learning rate schedule. We search the outer learning rate over $\{10^{-3}, 3 \times 10^{-4}, 10^{-4}\}$ and use a weight decay of $10^{-4}$. We estimate meta-gradients with Persistent Evolution Strategies (PES), using a truncation length of $K=50$. We also apply task augmentation with range of [0.001, 1000] during meta-training, following~\citep{moudgil2025celo}.

\paragraph{Downstream Evaluation.}
We evaluate ELO on realistic vision and language model pre-training tasks. For vision experiments, We train on ResNet-50~\citep{he2016deep} and ViT-Base/16~\citep{dosovitskiy2020image} on ImageNet-1K. For language modeling, we train GPT-2 models on FineWeb~\citep{penedo2024fineweb}, including GPT-2-small (124M) and GPT-2-medium (350M). Our training follows a 20 token per parameter budget~\citep{chinchilla}. All methods use cosine learning rate schedules, and we carefully tune the learning rate and weight decay for each optimizer family.

\subsection{Generalization to Practical Pre-training Tasks}
\finding{ELO-LOs meta-trained at a small scale are able to generalize to models with hundreds of millions of parameters, consistently improving over their corresponding base LOs across multiple benchmarks. Most notably, ELO-Celo2 outperforms well-tuned AdamW across all tasks and remains competitive with Muon on language modeling at different scales.}
\vspace{5pt}
% \begin{takeawaybox}
% \textbf{Key Takeaway:}
% ELO-LOs meta-trained at a small scale are able to generalize to models with hundreds of millions of parameters, consistently improving over their corresponding base LOs across multiple benchmarks. Most notably, ELO-Celo2 outperforms well-tuned AdamW across all tasks and remains competitive with Muon on language modeling at different scales.
% \end{takeawaybox}

All ELO baselines are meta-trained on four simple vision tasks using a one-layer MLP with width 32. We evaluate whether optimizers learned in this small meta-training setting can transfer to realistic optimization problems. We compare ELO-trained LOs with other LO baselines and hand-designed optimizers on various significantly larger scale vision and language model pre-training tasks.

\textbf{Image classification.} We train ViT-B/16 and ResNet-50 models on ImageNet-1K at $224 \times 224$ resolution for 50K steps with batch size 2048. We report top-1 validation accuracy in \textbf{Figure}~\ref{fig:in1k_train_val_loss} on three validation benchmarks: ImageNet-1K validation, ImageNet-ReaL, and ImageNet-V2. LOs meta-trained with ELO achieve stronger performance than all LO baselines using the same architecture and outperform a strong AdamW baseline. During our augmentation search, we find that LOs require substantially stronger data augmentation than AdamW. Under the augmentation setting well tuned for AdamW, LOs tend to overfit, while increasing augmentation strength improves their validation performance. In contrast, AdamW naturally generalizes better under a lighter augmentation, and its performance degrades under stronger augmentation. We provide details of the augmentation sweep in Appendix~\ref{fig:aug_bars}.

\textbf{Language model pre-training.}
We evaluate ELO on realistic out-of-distribution language modeling tasks (\textbf{Figure}~\ref{fig:fineweb_results} and \textbf{Figure}~\ref{fig:elo_lm_and_meta_efficiency} A). We first consider GPT-2-small (124M) trained for 2.5B tokens, where both ELO-small\_fc and ELO-Celo2 substantially improve over their corresponding base LOs. In this setting, ELO-Celo2 achieves the best performance, with a small advantage over Muon. We further evaluate on a larger-scale setting with GPT-2-medium (350M) trained for 7B tokens. ELO-Celo2 continues to significantly outperform AdamW and performs on par with Muon. These evidences show that ELO-trained LOs transfer effectively to language modeling tasks that are substantially different from those used during meta-training.

\subsection{ELO Design Analysis}
\begin{figure}[t]
    \centering
    {\includegraphics[width=1.0\linewidth]{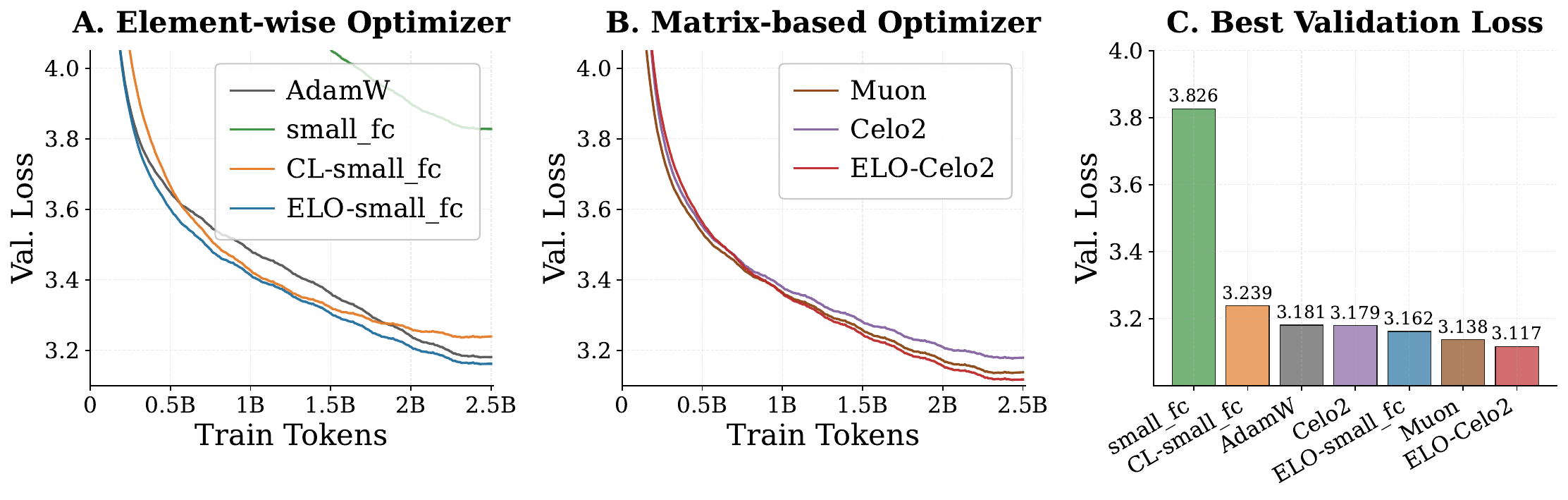}}\vspace{-10pt}
\caption{
\textbf{GPT-2 124M pretraining on FineWeb.}
Validation loss throughout 2.5B training tokens for element-wise and matrix-based optimizers.
ELO improves both LO architectures, outperforming other LO baselines and slightly exceeding their corresponding hand-designed counterparts.
}
\label{fig:fineweb_results}
\end{figure}
% \begin{takeawaybox}
% \textbf{Key Takeaway:}
% ELO works best when the resume buffer and progressive expert supervision are used together: the buffer improves long-horizon coverage, while expert supervision keeps these harder unrolls stable. The optimal expert is architecture dependent, with Adam preferred for \texttt{small\_fc} and Muon preferred for Celo2. Decoupling direction and magnitude supervision further outperforms direct $\ell_2$ regression.
% \end{takeawaybox}
\finding{ELO works best when the resume buffer and progressive expert supervision are used together: the buffer improves long-horizon coverage, while expert supervision keeps these harder unrolls stable. The optimal expert is architecture dependent, with Adam preferred for \texttt{small\_fc} and Muon preferred for Celo2. Decoupling direction and magnitude supervision further outperforms direct $\ell_2$ regression.}
\vspace{5pt}

In this section, we conduct ablation and comparison experiments to understand how each design of ELO contributes and how they interact.

\textbf{Ablating ELO's main components.}
ELO consists of two main components: the failure-aware resume buffer and progressive expert supervision. When applying only the failure-aware resume buffer, meta-training becomes unstable (\textbf{Figure}~\ref{fig:buffer_init_dir_mag} B, ``ELO-small\_fc (w/o expert)''). This instability arises because the buffer exposes the LO to long-horizon inner problems before it has learned reliable updates, causing errors to accumulate along the inner trajectory. In contrast, using only progressive expert supervision generalizes well to language model pretraining, but does not sufficiently improve long-horizon optimization (\textbf{Figure}~\ref{fig:ablation_expert_buffer} B, ``ELO-small\_fc (w/o buffer)''). Combining the two components gives stable meta-training and strong long-horizon optimization performance (\textbf{Figure}~\ref{fig:ablation_expert_buffer}, \textbf{Figure}~\ref{fig:elo_lm_and_meta_efficiency}, and \textbf{Figure}~\ref{fig:in1k_train_val_loss}).

\textbf{Expert optimizer selection.}
The choice of expert optimizer is important because the expert defines the update target used during imitation and is the only source of learning signal early on during meta-training. We search over several hand-designed optimizers independently for different LO architectures and compare the downstream performance of resulting optimizers on language model pre-training with GPT-2-124M, trained for 2.5B tokens. As shown in \textbf{Figure}~\ref{fig:ablation_expert_buffer} A, Adam gives the best validation loss when used as the expert for the element-wise \texttt{small\_fc}. In contrast, Muon works best as the expert for the matrix-based Celo2 optimizer. A likely explanation is that \texttt{small\_fc} applies element-wise update rules and does not explicitly model matrix operations, making it difficult to imitate Muon's orthogonalization based update. Celo2, by contrast, contains matrix-based operations and can more naturally imitate Muon-like behavior.

\textbf{Imitation objective.}
We compare our decoupled imitation objective with direct $\ell_2$ regression between the parameter updates produced by the LO and the expert optimizer. While direct $\ell_2$ regression ties update direction and scale together, we find that carefully tuning their relative weights substantially reduces meta-evaluation loss (\textbf{Figure}~\ref{fig:buffer_init_dir_mag} C). This indicates that direction and scale provide complementary supervision signals and should be balanced separately.

\subsection{ELO Hyperparameter Sensitivity}

% \begin{takeawaybox}
% \textbf{Key Takeaway:}
% Imitating expert's update direction is more important than matching its magnitude. A large buffer probability is preferred in our meta-training settings.
% \end{takeawaybox}
\finding{Imitating expert's update direction is more important than matching its magnitude. A large buffer probability is preferred in our meta-training settings.}
\vspace{5pt}

ELO introduces two critical hyperparameters: 1) the buffer resume probability $P_{\mathcal B}$, which controls how often unrolls are initialized from buffered states, and 2) the direction loss weight $\lambda$, which balances direction and magnitude supervision. We tune both hyperparameters using ELO-\texttt{small\_fc} and then use the selected values for the remaining experiments.

\textbf{Direction vs. magnitude.} We set $P_{\mathcal B}=0$ to isolating the imitation objective from the buffer mechanism. We sweep the direction loss weight $\lambda$ over $\{0.1,0.2,\ldots,0.9\}$ and select it according to meta-evaluation loss. As shown in Appendix~\ref{fig:dir_mag_sweep}, assigning weight $0.7$ to the direction term and $0.3$ to the magnitude term gives the best performance. This suggests that matching the expert update direction is more important than exactly matching its scale, although magnitude supervision remains useful for controlling the update size.

\textbf{Buffer probability.} $P_{\mathcal{B}}$ controls how aggressively meta-training compute is reallocated toward longer and harder regimes, which is critical for downstream long-horizon tasks. Larger values naturally increase the probability of training from later inner states, but can also make meta-training less stable. We search $P_{\mathcal B}$ over $\{0.1,\ldots,0.9\}$. To reduce the sweep cost while selecting a value that generalizes to downstream tasks, we conduct a two-state selection. We first select three candidates, $P_{\mathcal B}\in\{0.1,0.5,0.8\}$, based on meta-evaluation loss. We then select the final $P_{\mathcal B}=0.8$ according to the validation loss of GPT-2 (124M, 2.5B tokens) pretraining in Appendix~\ref{fig:buffer_prob_sweep}. In practice, smaller $P_{\mathcal B}$ is safer at small meta-training scale, while larger $P_{\mathcal B}$ becomes beneficial as meta-training becomes more stable and can support longer continuations.

% \section{Discussion \& Limitation}
% Through extensive experiments, we show that ELO further improves the capability of LOs and enables them to outperform state-of-the-art hand-designed optimizers on multiple realistic downstream tasks. However, the preferred resume probability may depend on the scale of meta-training. In future work, we plan to moderately expand the meta-training scale and study whether a clear scaling law emerges for this hyperparameter. A remaining limitation of current LOs is their longer per-step runtime compared with hand-designed optimizers, mainly due to their per-parameter update design. Although this is not the focus of this paper, we are also interested in exploring more efficient LO architectures in future work and meta-training them with ELO, with the goal of building LOs that can replace hand-designed optimizers in a broader range of practical settings.

\section{Discussion\&Limitation}
While we have made our best effort to provide a thorough scientific study and taken the necessary measures to ensure our results are sound, limitations naturally arise from the computational resources available in academia. In no particular order, our empirical results do not evaluate models beyond 350M scale, LM architectures beyond standard dense transformers (e.g.\ MoEs, efficient-attention variants, or hybrid architectures), or downstream tasks beyond image classification and language modeling. In addition, while we compare against strong hand-designed optimizers such as AdamW and Muon, we do not evaluate additional structured or second-order-inspired optimizers such as SOAP or Shampoo. On the implementation side, ELO can introduce additional overhead compared with standard meta-training recipes (Appendix~\ref{tab:meta_train_runtime}) due to constraints of the JAX framework, where more data movement is required across nested functions. This overhead is more visible on tiny meta-training tasks, where GPU computation is small and CPU and memory operations become the main bottleneck. As the task scale increases, however, runtime becomes dominated by GPU computation, and ELO’s relative overhead is expected to decrease substantially.

\section{Conclusion}
We proposed Efficient Long-hOrizon learning (ELO), a novel meta-training algorithm for learned optimization that combines decoupled progressive expert guidance with a failure-aware resume buffer to achieve efficient long-horizon learning with stability and improve downstream performance. Across various vision and language-model benchmarks, ELO-small\_fc and ELO-Celo2 outperform their corresponding learned-optimizer baselines, with ELO-Celo2 also outperforming AdamW and matching Muon on GPT2-124M/350M pre-training. These results suggest that ELO is a promising recipe for scaling learned optimization.

\newpage
\section*{Acknowledgements}
We thank Paul Janson, Abhinav Moudgil, and other members of the Belilovsky for helpful discussions and constructive feedback. This work was partially funded by FRQNT Doctoral (B2X) scholarship [B.T.] and FRQNT New Scholar grant [E.B.]. This work was also supported by computing resources from Mila (mila.quebec), Calcul Qu\'ebec, and the Digital Research Alliance of Canada.
The content is solely the responsibility of the authors and does not necessarily represent the views of the funding agencies.

\bibliography{ref}
\bibliographystyle{abbrv}

%%%%%%%%%%%%%%%%%%%%%%%%%%%%%%%%%%%%%%%%%%%%%%%%%%%%%%%%%%%%%%%%%%%%%%%%%%%%%%%
%%%%%%%%%%%%%%%%%%%%%%%%%%%%%%%%%%%%%%%%%%%%%%%%%%%%%%%%%%%%%%%%%%%%%%%%%%%%%%%
% APPENDIX
%%%%%%%%%%%%%%%%%%%%%%%%%%%%%%%%%%%%%%%%%%%%%%%%%%%%%%%%%%%%%%%%%%%%%%%%%%%%%%%
%%%%%%%%%%%%%%%%%%%%%%%%%%%%%%%%%%%%%%%%%%%%%%%%%%%%%%%%%%%%%%%%%%%%%%%%%%%%%%%
\newpage
\appendix
\section{PES gradients remain unbiased when resuming from the buffer }
\label{apdx:pes}
In the following sections, 
For the reader's convenience, we will now restate background from~\citep{vicol2021pes} required to understand our proof. \citep{vicol2021pes} derive \emph{Persistent Evolution Strategies} (PES) for unrolled computation graphs whose total loss is
$L(\boldtheta)=\sum_{t=1}^T L_t(\boldtheta)$, where shared parameters $\boldtheta\in\mathbb{R}^P$ appear at every timestep.
To account for the contribution of each application of $\boldtheta$, they introduce per-timestep copies $\boldtheta_t=\boldtheta$
and stack them as $\Theta=(\boldtheta_1,\dots,\boldtheta_T)^\top$; writing $L_t(\bolds_t;\boldtheta)$ as
$L_t(\boldtheta_1,\dots,\boldtheta_t)$ (or simply $L_t(\Theta)$) implies $L_t(\cdot)$ is independent of $\boldtheta_\tau$ for $\tau>t$.
Let $\boldepsilon=(\boldepsilon_1,\dots,\boldepsilon_T)^\top$ be a matrix of perturbations with entries i.i.d.\ Gaussian of variance $\sigma^2$,
let $\otimes$ denote the Kronecker product, and define $\boldxi_t=\sum_{\tau=1}^t \boldepsilon_\tau$.
Starting from the full gradient $\pd{L(\Theta)}{\ovec{\Theta}}\in\mathbb{R}^{PT\times 1}$ and using ES,
\begin{align}
\frac{dL(\boldtheta)}{d\boldtheta} = \sum_{\tau=1}^T \pd{L(\Theta)}{\boldtheta_\tau} = (\mb I\otimes \boldone^\top)\pd{L(\Theta)}{\ovec{\Theta}}, \\
\boldg^{\text{PES}} = (\mb I\otimes \boldone^\top)\,\expect{\boldepsilon}{\frac{1}{\sigma^2}\,\ovec{\boldepsilon}\,L(\Theta+\boldepsilon)} \\= \frac{1}{\sigma^2}\expect{\boldepsilon}{\Big(\sum_{\tau=1}^T \boldepsilon_\tau\Big)L(\Theta+\boldepsilon)}.
\end{align}
This ES approximation is an unbiased estimator of the gradient of the Gaussian-smoothed objective
$\mathbb{E}_{\boldepsilon}[L(\Theta+\boldepsilon)]$, and it decomposes sequentially as
$\boldg^{\text{PES}}=\expect{\boldepsilon}{\sum_{t=1}^T \hat{\boldg}^{\text{PES}}_{t,\boldepsilon}}$ with
$\hat{\boldg}^{\text{PES}}_{t,\boldepsilon}=\frac{1}{\sigma^2}\boldxi_t\,L_t(\boldtheta_1+\boldepsilon_1,\dots,\boldtheta_t+\boldepsilon_t)$.
The Monte Carlo PES estimator is then
\[
\gpes=\frac{1}{N}\sum_{i=1}^N\sum_{t=1}^T \hat{\boldg}^{\text{PES}}_{t,\boldepsilon^{(i)}}.
\]
In practice, \cite{vicol2021pes} use antithetic sampling,
\[
\gpesanti
%= (\mb I\otimes \boldone^\top)\expect{\boldepsilon}{\frac{1}{2\sigma^2}\,\ovec{\boldepsilon}\Big(L(\Theta+\boldepsilon)-L(\Theta-\boldepsilon)\Big)}
\approx \frac{1}{2\sigma^2N}\sum_{i=1}^N\sum_{t=1}^T \boldxi_t^{(i)}\Big(L_t(\Theta+\boldepsilon^{(i)})-L_t(\Theta-\boldepsilon^{(i)})\Big).
\]
Finally, they show PES is exactly unbiased for quadratic objectives:

% if $\nabla_{\boldtheta}L(\boldtheta)$ exists and $L$ is quadratic so that
% \begin{align}
% L(\Theta+\boldepsilon)= L(\Theta)+\veps^\top\nabla_{\vTheta}L(\Theta)\\+\frac{1}{2}\veps^\top\nabla^2_{\vTheta}L(\Theta)\veps,
% \end{align}
% then $\text{\emph{bias}}(\gpesanti)=\mathbb{E}_{\boldepsilon}[\gpesanti]-\nabla_{\boldtheta}L(\boldtheta)=\boldzero$ \cite{vicol2021pes}.

% \begin{statement}[PES is unbiased restated from~\cite{vicol2021pes}] \label{statement:unbiased}
% The statement environment was already declared at the top of the file, so
% we reuse the existing numbering and reference it via Statement~\ref{statement:unbiased}.
% The following is a non-numbered restatement used inside the appendix.
\begin{statement}[PES is unbiased restated from~\cite{vicol2021pes}]\label{statement:unbiased}
    Let $\boldtheta \in \mathbb{R}^P$ and $L(\boldtheta) = \sum_{t=1}^T L_t(\boldtheta)$.
    Suppose that $\nabla_{\boldtheta} L(\boldtheta)$ exists, and assume that $L$ is quadratic, so that it is equivalent to its second-order Taylor series expansion:
    $
    L(\Theta + \boldepsilon) = L(\Theta) + \veps^\top \nabla_{\vTheta} L(\Theta) + \frac{1}{2} \veps^\top \nabla^2_{\vTheta} L(\Theta) \veps
    $.
    Then, $\text{\emph{bias}}(\gpesanti) = \mathbb{E}_{\boldepsilon}[\gpesanti] - \nabla_{\boldtheta} L(\boldtheta) = \boldzero$.
\end{statement}

When meta-training with ELO, we use an identical gradient estimator $\hat{\vg}^{\text{PES-A-Buffer}} \equiv \hat{\vg}^{\text{PES-A}}$, with the only difference being that some particles may have been resumed from the particle replay buffer. This has the effect of distancing the subsequent $\boldtheta_i$. However, by the definition of $\Theta$ the resulting gradient is unaffected by this distancing.

% \begin{corollary}[PES gradients from the replay buffer are unbiased]
% Mirroring the assumptions from Statement~\ref{statement:unbiased} and additionally assuming that the full PES state is preserved within the buffer, then $\text{\emph{bias}}(\hat{\vg}^{\text{PES-A-Buffer}}) = \mathbb{E}_{\boldepsilon}[\hat{\vg}^{\text{PES-A-Buffer}}] - \nabla_{\boldtheta} L(\boldtheta) = \boldzero$.
% \end{corollary}
\newtheorem*{corollary*}{\textbf{Corollary 4.2}}

\begin{corollary*}[Corollary 4.2 restated: PES gradients from the replay buffer are unbiased]
Mirroring the assumptions from Statement~\ref{statement:unbiased} and additionally assuming that the full PES state is preserved within the buffer and subsequently resumed frim, then $\text{\emph{bias}}(\hat{\vg}^{\text{PES-A-Buffer}}) = \mathbb{E}_{\boldepsilon}[\hat{\vg}^{\text{PES-A-Buffer}}] - \nabla_{\boldtheta} L(\boldtheta) = \boldzero$.
\end{corollary*}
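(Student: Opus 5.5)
The plan is to reduce the corollary to Statement~\ref{statement:unbiased} by showing that a buffered resume does not change the unrolled computation graph that PES differentiates. The argument has three steps.

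\textbf{Step 1: model the buffer as a pause, not a restart.} A buffer operation stores the full inner state $\bolds_{K_{\mathrm{push}}}$ together with the PES state: the per-particle accumulators $\boldxi^{(i)}_{K_{\mathrm{push}}}$ and the antithetic perturbed trajectories. A later unroll then continues from exactly this state. So a resumed particle is the same single trajectory, paused at step $K_{\mathrm{push}}$ and continued with fresh perturbations $\boldepsilon_{K_{\mathrm{push}}+1},\dots$ that are independent of the past.

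\textbf{Step 2: identify the objective and the estimator.} With $T$ the total length of the stitched trajectory, the relevant objective is $L(\boldtheta)=\sum_{t=1}^T L_t(\boldtheta_1,\dots,\boldtheta_t)$ on the stacked copies $\Theta$. Because the accumulator is restored, the per-step contribution $\frac{1}{2\sigma^2}\boldxi_t(L_t(\Theta+\boldepsilon)-L_t(\Theta-\boldepsilon))$ for $t>K_{\mathrm{push}}$ still uses $\boldxi_t=\sum_{\tau\le t}\boldepsilon_\tau$ over the whole history. This is exactly the term in $\gpesanti$. Summing the per-truncation estimates over the stitched trajectory therefore gives $\hat{\vg}^{\text{PES-A-Buffer}}=\gpesanti$ as random variables, conditional on the inner trajectory.

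\textbf{Step 3: invoke the statement.} Taking expectations over $\boldepsilon$ and applying Statement~\ref{statement:unbiased} under the quadratic assumption gives zero bias.

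The possible gap is the time-varying objective. The LO parameters $\phi$ change between meta-steps, and the buffered segment was produced under earlier parameters. I would handle this exactly as in~\citep{vicol2021pes}. The per-timestep copies $\boldtheta_t$ in $\Theta$ record the parameters actually applied at each step, and the estimator targets $\sum_\tau \partial L/\partial\boldtheta_\tau$ evaluated at those copies. Any ``distancing'' of the $\boldtheta_t$ caused by intervening meta-updates is already absorbed into the definition of $\Theta$; it is the same staleness that ordinary PES tolerates across truncations. This is the point the paper's remark makes.

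The main obstacle is the independence and selection issue. The resume point $K_{\mathrm{push}}=\max(K,n^\star-R)$ is chosen from losses that depend on past perturbations, so the stopping choice is data-dependent. I would resolve this by conditioning on the entire history up to $K_{\mathrm{push}}$. The state, the accumulator, and the choice of $K_{\mathrm{push}}$ are all measurable with respect to that history, while the future perturbations are independent of it and Gaussian. Unbiasedness of the future terms then follows conditionally, via the same Gaussian integration-by-parts/Taylor identity used for Statement~\ref{statement:unbiased}.

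The past terms are not recomputed, and the resumed particle is distributionally a trajectory prefix. Under the quadratic assumption, the total expectation therefore reduces to the gradient of the Gaussian-smoothed objective over the resumed horizon, which equals $\nabla_{\boldtheta}L(\boldtheta)$. If this selection-dependence argument cannot be made clean, I would fall back on the weaker claim: unbiasedness conditional on the chosen resume point, with respect to the objective defined on the segment actually unrolled.
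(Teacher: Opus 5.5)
Your Steps 1--3 and your staleness paragraph are the paper's argument. Its proof is the one-line observation that Statement~\ref{statement:unbiased} places no temporal constraint on $\Theta$, plus the remark that the ``distancing'' of the $\boldtheta_t$ is absorbed into the definition of $\Theta$. The step that fails is where you go beyond the paper and claim to dispose of the selection issue.

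Take each $L_t$ quadratic, which is a special case of the hypothesis. Write $k=K_{\mathrm{push}}$ and $\boldsymbol{h}_{t,\tau}=\nabla_{\boldtheta_\tau}L_t(\Theta)$. Then $L_t(\Theta+\boldepsilon)-L_t(\Theta-\boldepsilon)=2\sum_{\tau\le t}\boldepsilon_\tau^\top\boldsymbol{h}_{t,\tau}$ exactly. For $t>k$, split $\boldxi_t=\boldxi_k+\sum_{k<\tau\le t}\boldepsilon_\tau$ and average only over the fresh perturbations:
\[
\mathbb{E}\Big[\tfrac{1}{2\sigma^2}\,\boldxi_t\big(L_t(\Theta+\boldepsilon)-L_t(\Theta-\boldepsilon)\big)\,\Big|\,\text{history}\Big]
=\sum_{k<\tau\le t}\boldsymbol{h}_{t,\tau}+\frac{1}{\sigma^2}\,\boldxi_k\sum_{\tau\le k}\boldepsilon_\tau^\top\boldsymbol{h}_{t,\tau}.
\]
The second term is the only surviving contribution of the restored accumulator. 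It equals $\sum_{\tau\le k}\boldsymbol{h}_{t,\tau}$ only after averaging the \emph{past} perturbations under their unconditional law, where $\mathbb{E}[\boldxi_k\boldepsilon_\tau^\top]=\sigma^2 I$. So conditioning on the history does not make the future terms unbiased. It leaves a random vector whose mean depends on how that history was selected.

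In ELO the resume point is chosen from $V_n$, i.e.\ from task losses that depend on the realized perturbations. Hence, conditional on $K_{\mathrm{push}}=k$, the past perturbations are no longer i.i.d.\ $\mathcal N(0,\sigma^2 I)$. In general $\mathbb{E}[\boldxi_k\boldepsilon_\tau^\top\mid K_{\mathrm{push}}=k]\neq\sigma^2 I$, and the bias $\sum_{t>k}\sum_{\tau\le k}\big(\sigma^{-2}\mathbb{E}[\boldxi_k\boldepsilon_\tau^\top\mid K_{\mathrm{push}}=k]-I\big)\boldsymbol{h}_{t,\tau}$ does not vanish. Your claim that the resumed particle is ``distributionally a trajectory prefix'' is precisely what fails. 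The fallback has the same defect: with $\boldxi_k$ restored, the second term survives conditionally on the resume point. It disappears only if you reset the accumulator, which violates the corollary's hypothesis and gives a truncated rather than full-horizon estimator.

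Two smaller inaccuracies feed into this. First, $K_{\mathrm{push}}=\max(K,n^\star-R)$ uses $V_n$ over the whole completed unroll, so it depends on perturbations applied \emph{after} $K_{\mathrm{push}}$. It is therefore not measurable with respect to the history up to $K_{\mathrm{push}}$, as you assert. Second, for the same reason the ``pause'' picture of Step~1 is off. A resume rewinds to a state strictly before the end of the unroll just completed and re-traverses those steps with fresh noise, so it branches the trajectory. Branching alone is harmless, since each estimate sees one prefix plus an independent continuation. But it is what lets the choice of prefix depend on perturbations that are then discarded.

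To close the argument you need an additional hypothesis: $K_{\mathrm{push}}$, like the resume coin $u$, must be independent of the PES perturbations (e.g.\ fixed in advance, or computed from an unperturbed run). Then the buffered prefix is an unselected PES prefix, and your Steps 1--3, equivalently the paper's one-line proof, go through. The paper adopts this idealization implicitly and never discusses selection, so you were right to raise it. As written, though, your proposal claims a resolution it does not have.
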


\textbf{Proof.}
Since Statement 4.1 has no temporal constraint on $\Theta$, it directly applies to our case where particles are restarted from the buffer. $\qed$

\section{Extended Related Work}\label{apdx:sec:relatedwork}
We now review additional related works beyond those discussed in section~\ref{sec:background}.

\paragraph{Learned Optimization}
Learned optimizers (LOs) aim to replace hand-crafted optimizers with small neural networks. In incipient work,~\citep{andrychowicz2016learning} show that an LSTM can successfully be used to optimize small deep learning problems to a better loss than SGD or Adam. Follow-up research proposed new gradient estimators~\cite{metz2019understanding, vicol2021pes,vicol2023low,li2023variance}, introduced new architectures and features~\citep{metz2211velo,metz2022practical}, proposed low cost meta-training recipes~\cite{moudgil2025celo,moudgil2026celo2,therien2024mulo}, and introduced stabilizing mechanisms to improve genralization~\cite{strongerbaselines,harrison2022closer}. Of these works,~\cite{strongerbaselines} is the most closely related to our own as their proposed meta-training scheme also leverages behaviour cloning. However, unlike our work, they (1) behaviour clone using an alternating optimization scheme instead of adding the behavior cloning term to the meta-loss and (2) they do no use a replay buffer.

\paragraph{Replay Buffers.}
Replay buffers are widely used in reinforcement learning (RL)~\citep{ross2011reduction, liu2018effects,zhang2017deeper} and continual learning (CL)~\citep{rolnick2019experience, chaudhry2021using}. They store trajectories collected during training, enabling the learner to sample from past experiences rather than relying solely on the most recent data. In RL, this helps break temporal correlations~\citep{mnih2015human}, improves sample efficiency~\citep{schaul2015prioritized,schmitt2018kickstarting,vecerik2017leveraging,hester2018deep,nair2017overcoming}, and stabilize learning, while in CL it plays a key role in alleviating catastrophic forgetting~\citep{rolnick2019experience,buzzega2020dark}. Unlike conventional replay buffers for CL or RL, which typically contain existing training examples, we leverage a particle buffer that contains inner-problem checkpoints from previous unrolls, allowing existing particles to restart from the buffer instead of random initialization.

\paragraph{Behavior Cloning.}
Behavior Cloning (BC)~\citep{rajaraman2020toward,torabi2018behavioral} is one of the most widely used paradigms in imitation learning~\citep{pomerleau1988alvinn, osa2018algorithmic,liu2021curriculum}, where the objective is to approximate an expert policy by directly regressing from observed states to expert actions. Formally, given a dataset of expert demonstrations
$\mathcal{D} = \{(x_i, a_i^\star)\}_{i=1}^N$,
where $x_i \in \mathcal{X}$ denotes the state and $a_i^\star \in \mathcal{A}$ is the corresponding expert action, the goal is to learn a policy $\pi_\theta: \mathcal{X} \to \mathcal{A}$, parameterized by $\theta$, that closely approximates the expert's behavior. A simple version of behavior cloning solves the supervised regression problem~\citep{florence2022implicit}:
\begin{equation}
    \hat{\pi}_\theta = \arg\min_{\pi_\theta} \; \frac{1}{N}\sum_{i=1}^N \left\|\pi_\theta(x_i) - a_i^\star \right\|_2^2.
\end{equation}
In our work, we leverage progressive behavior cloning primarily as a stabilizer for buffer-based meta-training, enabling efficient long-unroll meta-training.

\section{Wall-clock Optimizer steps and meta-training time}

\begin{table}[h]
\centering
\caption{
\textbf{Wall-clock training time of different learned optimization algorithms on a single H100 GPU.}
Each algorithm is meta-trained for $100{,}000$ outer steps following the recipes outlined in Sec.~\ref{ss:experiments}. 
We report the median per-step time, computed as the median of consecutive \texttt{\_runtime} deltas logged on W\&B; total time is computed as median${}\times{}$steps. 
$\Delta$ is the relative overhead over the same-family baseline, either \texttt{small\_fc} or \texttt{Celo2}. 
Although these runs are measured on H100, our meta-training tasks are small and the runtime is mainly \textbf{limited by CPU operations and data movement}, so using more accessible GPUs such as L40S gives nearly unchanged wall-clock time. 
As the per-step optimizee computation increases, e.g., in the Celo2 setting, ELO's relative overhead becomes substantially smaller.
}
\label{tab:meta_train_runtime}
\begin{tabular}{@{}llrrr@{}}
\toprule
Method                          & LO arch.    & Median (s/step) & Total wall-time & $\Delta$ \\
\midrule
\textsc{small\_fc}                       & small\_fc  & 0.116           & 3h14m           & --      \\
\textsc{CL-small\_fc}                    & small\_fc  & 0.122           & 3h23m           & +5\%    \\
\textsc{ELO-small\_fc} (w/o expert)      & small\_fc  & 0.157           & 4h22m           & +35\%   \\
\textsc{ELO-small\_fc}                   & small\_fc  & 0.158           & 4h23m           & +36\%   \\
\midrule
\textsc{Celo2}                           & Celo2       & 0.211           & 5h51m           & --      \\
\textsc{ELO-Celo2}                       & Celo2       & 0.231           & 6h25m           & +9.5\%  \\
\bottomrule
\end{tabular}
\end{table}

% \section{Extended background}
% \textbf{Meta-training Objective} In general, learning the meta-parameters, $\phi$, involves solving an optimization problem of the form:
% \begin{equation}\label{eq:obj}
% \min_{\phi} \; \mathbb{E}_{\tau \sim \mathcal{D}} \Big[ \sum_{n=1}^N \mathcal{L}^{\text{meta}}_n(\theta_n; \phi, \tau) \Big],
% \end{equation}
% where $\mathcal{D}$ is a distribution of tasks, $\phi$ are the meta-parameters, $\tau$ specifies a task (e.g., optimizee initialization, dataset, objective), and $\theta_n$ evolves as a function of $\phi$ as described in eq~\ref{eq:lo_update}. The objective seeks to minimize the sum of per-timestep losses over the training horizon $N$~\citep{therien2024mulo}.

%We will show that the same proof holds when using a particle replay buffer to resume meta-training at a later stage.

\section{Extended Empirical Settings}\label{apdx:ss:experiments}
This section complements sec.~\ref{ss:experiments} from the main manuscript with additional experimental details.

\paragraph{Meta-training.} 
% Our experimental setup and meta-training pipeline largely follow~\citep{metz2022practical, therien2024mulo,moudgil2025celo} with best hyperparameters applied for each baseline. All learned optimizers are meta-trained on four simple image classification tasks, MNIST, Fashion-MNIST, CIFAR-10, and SVHN, resized to $8 \times 8$. The optimizee is a one-layer MLP with width 32. We meta-train for 100K outer steps with batch size 64, using AdamW as the meta-optimizer with a cosine learning rate schedule. We search the outer learning rate over $\{10^{-3}, 3 \times 10^{-4}, 10^{-4}\}$ and use a weight decay of $10^{-4}$. We estimate meta-gradients with Persistent Evolution Strategies (PES), using a truncation length of $K=50$. We also apply task augmentation with range of [0.001, 1000] during meta-training, following~\citep{moudgil2025celo}.

For \texttt{small\_fc\_lopt} and Celo2, we sample the unroll length from $[100, 2000]$ using log-uniform sampling. For CL-\texttt{small\_fc\_lopt}, we use Adam as the expert optimizer. We evaluate maximum meta-training unroll lengths of 10K and 30K, and find that 10K is the most stable setting and gives the best downstream performance. For ELO variants, we sample the base unroll length uniformly from $[100, 2000]$ and use the proposed failure-aware resume buffer to resume from difficult regions of the current optimization trajectory. We first evaluate learned optimizer checkpoints on ImageNet-1K ($32 \times 32$), using a three-layer MLP with width 128 as the optimizee. Each optimizer is evaluated for 10K inner steps with batch size 4096. This evaluation serves as an intermediate meta-evaluation benchmark before transferring the learned optimizers to larger downstream workloads.

\paragraph{Image classification.}
For realistic vision pre-training, we evaluate on ImageNet-1K at $224 \times 224$ resolution using ResNet-50~\citep{he2016deep} and ViT-Base/16~\citep{dosovitskiy2020image}. We compare learned optimizers mainly against AdamW~\citep{loshchilov2017decoupled}, which remains the dominant trend for vision tasks. Training runs for 50K steps with batch size 2048 under a cosine learning-rate schedule and weight decay searched from \{0.1, 0.01, 0.001, 0.0001\}. For AdamW, we sweep seven learning rates on a logarithmic grid in $[10^{-2}, 10^{-5}]$ for ViT-Base/16 and in $[10^{-1}, 10^{-4}]$ for ResNet-50. For all learned optimizers, we sweep seven learning rates logarithmically in $[10^{-2}, 10^{-5}]$. We also sweep the data augmentation strength for each method, ranging from light augmentation (i.g. horizontal flip, random crop) to stronger combinations (i.g. randomly select six augmentations from a pool of more than twenty operations).

\paragraph{Language model pre-training.}
For language modeling, we compare learned optimizers with both AdamW and matrix-based optimizers such as Muon~\citep{jordan2024muon}, which has shown strong performance for language model pre-training. We train GPT-2 models on FineWeb~\citep{penedo2024fineweb} at two scales: 124M parameters~\citep{radford2019language} for 2.5B tokens over 19073 steps with batch size 128, and 350M parameters for 7B tokens over 26701 steps with batch size 256, both using sequence length 1024. This token budget follows the scaling-law~\citep{kaplan2020scaling}. We use a cosine learning rate schedule for all methods. We search weight decay from \{0.1, 0.01, 0.001, 0.0001\} for all methods. For learned optimizer baselines, we sweep seven learning rates logarithmically in $[10^{-2}, 10^{-5}]$. For AdamW and Muon, we tune the learning rate separately: AdamW is swept over $[10^{-2}, 10^{-5}]$, while Muon is swept over $[10^{-1}, 10^{-4}]$, both with seven logarithmically spaced values.

%To aid the reader's understanding, we illustrate the meta-training process of a learned optimizer in \textbf{Figure}~\ref{fig:meta-training-pipline}.

% \subsection{Sweep of Direction and Magnitude Imitation Weights}

% \label{app:dir_mag_sweep}

% We study the effect of the direction and magnitude weights in the imitation objective. 

% The two terms provide different supervision signals: the direction loss encourages the learned optimizer to follow the expert update direction, while the magnitude loss controls the update scale. 

% As shown in \textbf{Figure}~\ref{fig:dir_mag_sweep}, placing more weight on direction generally gives lower meta-validation loss, while using too much magnitude supervision weakens meta-training. 

% This supports our choice of decoupling direction and magnitude instead of directly regressing the full expert update with a single loss.

\begin{figure}[t]
    \centering
    {\includegraphics[width=1.0\linewidth]{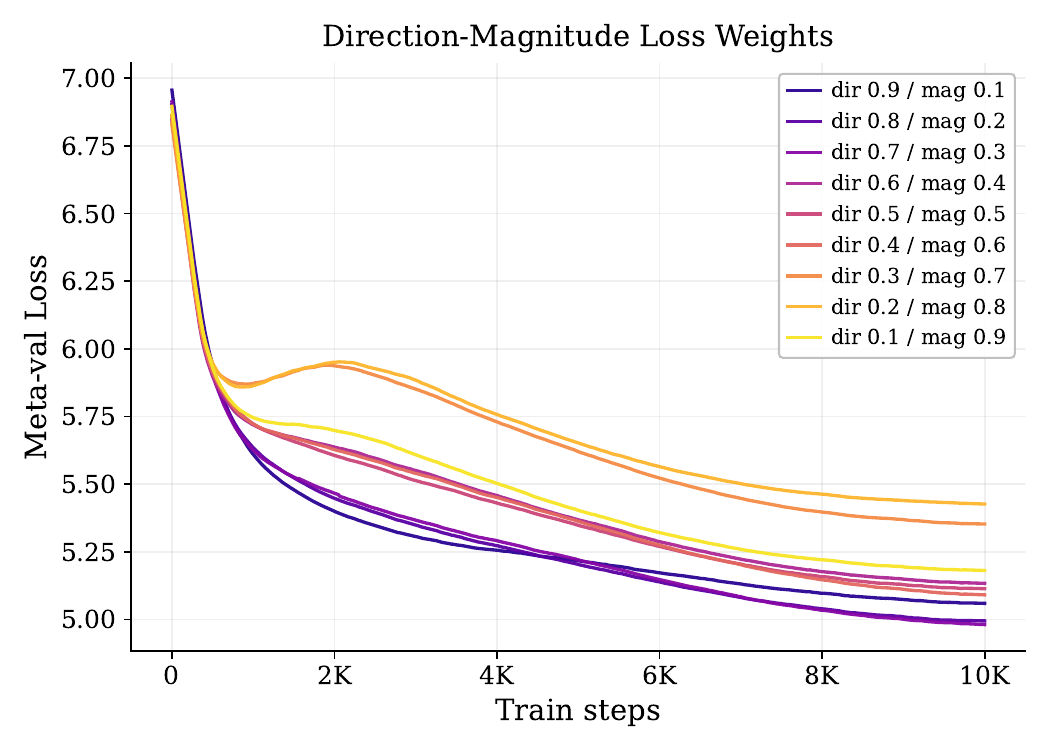}}
\caption{
Effect of direction and magnitude weights in the imitation objective.
We sweep the weighting between direction loss and magnitude loss and report meta-validation loss during training.
Direction-dominant objectives consistently achieve lower meta-validation loss, while overly large magnitude weights slow convergence and lead to worse final performance.
This supports decoupling update direction and scale rather than using a single regression loss on the full expert update.
}
\label{fig:dir_mag_sweep}
\end{figure}

% \begin{figure}[t]
%     \centering
%     \includegraphics[width=1.0\linewidth]{figs/dir_mag_sweep.pdf}
%     \caption{
%     Sweep of direction and magnitude weights in the imitation objective. 
%     Direction-dominant weighting generally yields lower meta-validation loss, indicating that update direction provides a stronger supervision signal than update magnitude in early meta-training.
%     }
%     \label{fig:dir_mag_sweep}
% \end{figure}

\begin{figure}[t]
    \centering
    {\includegraphics[width=1.0\linewidth]{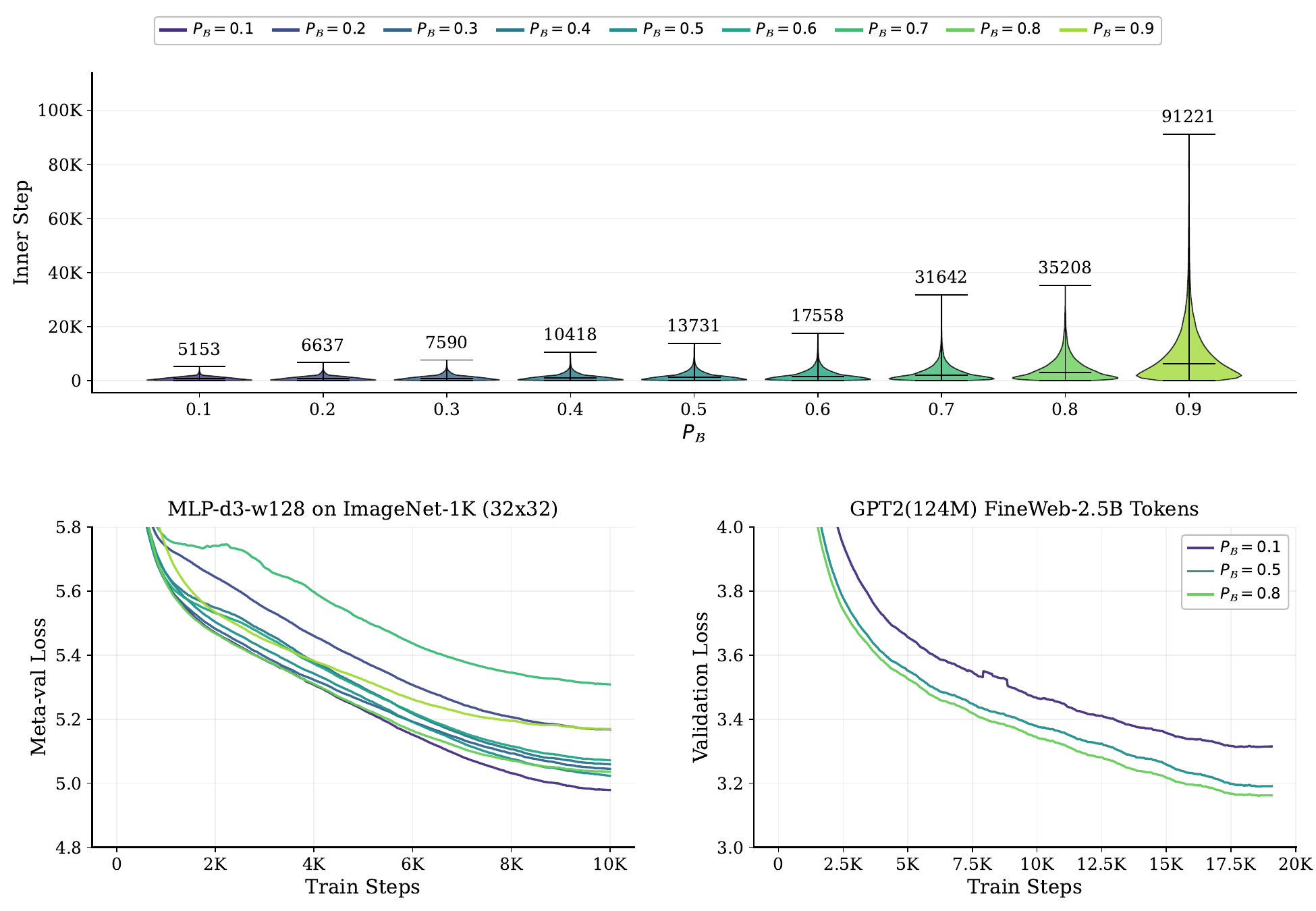}}
\caption{
Effect of the resume probability $P_{\mathcal B}$ on long-horizon sampling and downstream performance.
Top: increasing $P_{\mathcal B}$ shifts the sampled inner steps toward later regions of the trajectory, with the maximum reached step increasing from 5,153 at $P_{\mathcal B}=0.1$ to 91,221 at $P_{\mathcal B}=0.9$.
Bottom left: on the ImageNet-1K $32 \times 32$ meta-evaluation task, smaller $P_{\mathcal B}$ generally gives better meta-validation loss, yet with one is achieved at $P_{\mathcal B}=0.1$.
Bottom right: on GPT-2 pre-training with 124M parameters on FineWeb for 2.5B tokens, larger $P_{\mathcal B}$ improves validation loss, indicating that practical long-horizon workloads benefit from more frequent resume-based sampling.
}
\label{fig:buffer_prob_sweep}
\end{figure}

\begin{figure}[t]
    \centering
    {\includegraphics[width=1.0\linewidth]{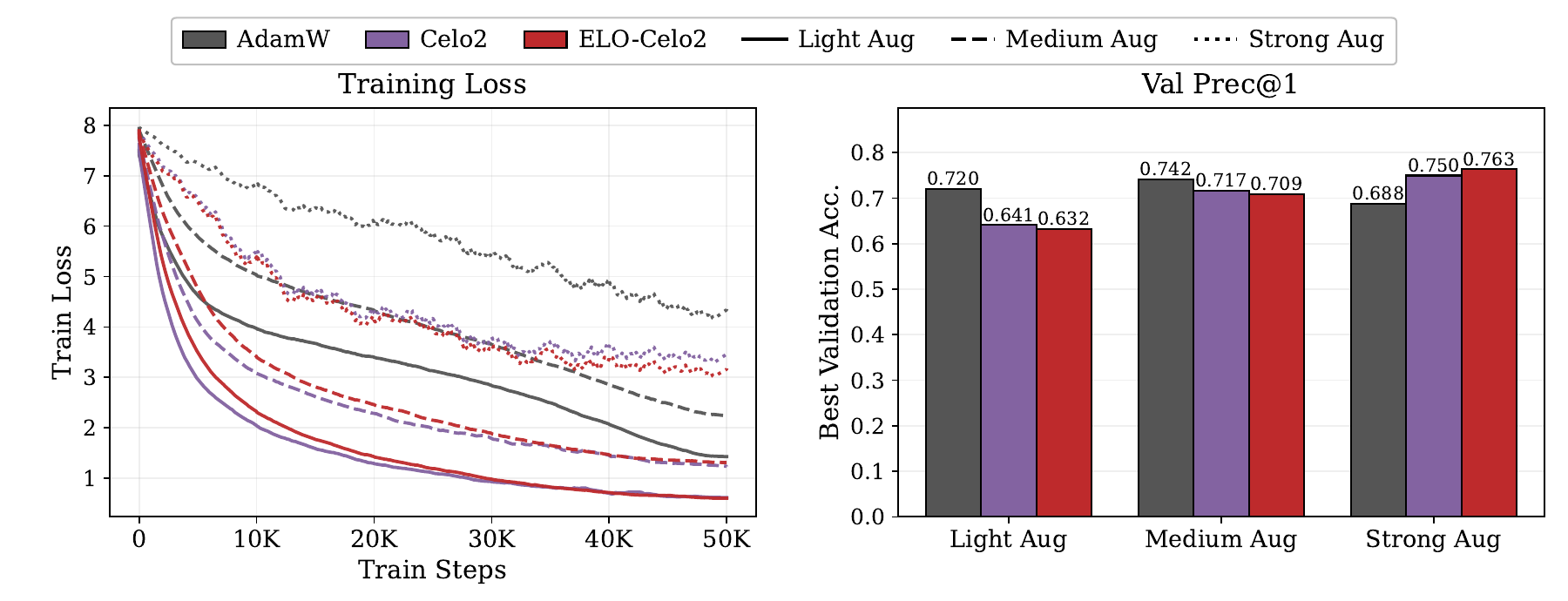}}
\caption{
In our setting, learned optimizers achieve their best validation accuracy under stronger augmentation, using 6 randomly selected operations from a pool of more than 20, whereas hand-designed optimizers perform best under medium augmentation, using 4 randomly selected operations.
}
\label{fig:aug_bars}
\end{figure}

\clearpage

%%%%%%%%%%%%%%%%%%%%%%%%%%%%%%%%%%%%%%%%%%%%%%%%%%%%%%%%%%%%
% NeurIPS Paper Checklist (required, does not count toward page limit)
%%%%%%%%%%%%%%%%%%%%%%%%%%%%%%%%%%%%%%%%%%%%%%%%%%%%%%%%%%%%
\newpage

\end{document}